\documentclass{article}

\usepackage{PRIMEarxiv}

\usepackage[utf8]{inputenc} 
\usepackage[T1]{fontenc}    
\usepackage{hyperref}       
\usepackage{url}            
\usepackage{booktabs}       
\usepackage{amsfonts}       
\usepackage{nicefrac}       
\usepackage{microtype}      
\usepackage{lipsum}
\usepackage{graphicx}
\usepackage{cite}
\usepackage{amsmath,amssymb,amsfonts}
\usepackage{algorithmic}
\usepackage{graphicx}
\usepackage{textcomp}
\usepackage[normalem]{ulem}
\usepackage{soul}
\usepackage{amsthm}
\theoremstyle{plain}
\usepackage{multirow}
\usepackage{multicol}
\usepackage{pifont}
\newtheorem{lemma}{Lemma}
\graphicspath{{media/}}     

\title{Adaptive Prototype-based Interpretable Grading of Prostate Cancer}

\author{
  Riddhasree Bhattacharyya, Pallabi Dutta, Sushmita Mitra \\
  Machine Intelligence Unit, \\
  Indian Statistical Institute, \\
  Kolkata 700108 \\
  \texttt{\{riddhasreeb@gmail.com, duttapallabi2907@gmail.com, sushmita@isical.ac.in\}}\\
}

\begin{document}
\maketitle

\begin{abstract}
Prostate cancer being one of the frequently diagnosed malignancy in men, the rising demand for biopsies places a severe workload on pathologists. The grading procedure is tedious and subjective, motivating the development of automated systems. Although deep learning has made inroads in terms of performance, its limited interpretability poses challenges for widespread adoption in high-stake applications like medicine. Existing interpretability techniques for prostate cancer classifiers provide a coarse explanation but do not reveal why the highlighted regions matter. In this scenario, we propose a novel prototype-based weakly-supervised framework for an interpretable grading of prostate cancer from histopathology images. These networks can prove to be more trustworthy since their explicit reasoning procedure mirrors the workflow of a pathologist in comparing suspicious regions with clinically validated examples. The network is initially pre-trained at patch-level to learn robust prototypical features associated with each grade. In order to adapt it to a weakly-supervised setup for prostate cancer grading,  the network is fine-tuned with a new prototype-aware loss function. Finally, a new attention-based dynamic pruning mechanism is introduced to handle inter-sample heterogeneity, while selectively emphasizing  relevant prototypes for optimal performance. Extensive validation on the benchmark PANDA and SICAP datasets confirms that the framework can serve as a reliable assistive tool for pathologists in their routine diagnostic workflows.
\end{abstract}

\keywords{Interpretability \and Whole Slide Images \and Classification \and Deep Learning \and Prostate cancer}

\section{Introduction}
\label{sec:introduction}

Prostate cancer ranks as the second most frequently diagnosed malignancy in men, contributing to approximately 14.5\% of all male cancer cases globally. The standard clinical workflow for prostate cancer diagnosis and grading involves staining of biopsy-derived prostate tissue, followed by assessment using the Gleason grading system. The current system comprises of three Gleason Grades (GG) (3, 4, 5), where each grade has an inverse correlation with the degree of gland differentiation. While Gleason pattern (GP) 3 is characterized by discrete and dense glandular regions, GP 4 exhibits poorly formed, fused, cribriform, or papillary glandular structures, indicating partial loss of glandular organization. However, GP 5 represents the most poorly differentiated architecture, showing isolated tumour cells without glandular lumina, often with comedonecrosis or pseudo-rosetoid formations \cite{silva2020going}. In practice, pathologists consider the sum of both the first and second-most dominant GGs (in terms of proportion and severity) in prostate biopsies for reporting the gleason score of the entire sample.

However, this grading process is inherently tedious, demands substantial expertise, and is susceptible to both intra- and inter-observer variability. The rising demand of prostate biopsies places severe workload on the pathologists, thus hindering timely treatment. In this scenario, automated systems can alleviate the problem of subjectivity in the task; thereby, producing replicable results while increasing the throughput of pathologists. Development of automated systems using Deep Learning (DL) to analyze digitized biopsies or histopathology Whole Slide Images (WSIs) has become an interesting research area \cite{qu2022towards}. Central to this advancement is the rise of Convolutional Neural Networks (CNNs) \cite{lecun15}, which form the backbone of most modern DL systems.

However, application of DL to histology faces two prime challenges, {\it viz.} computational demand and lack of interpretability. The high resolution of WSIs makes it computationally infeasible to train the DL models end-to-end. This has led to a paradigm shift towards weakly-supervised or Multiple Instance Learning (MIL)-based algorithms \cite{qu2022towards} in histopathology, where each WSI is represented as a bag of smaller patches. The model is subsequently trained using the WSI-level label with the WSI-level representation obtained by aggregating the patch-level features or predictions \cite{gadermayr2024multiple}. Though MIL-based methods successfully address the computational bottleneck to a certain extent, their lack of sufficient interpretability poses challenges to a widespread adoption of DL in clinical settings \cite{holzinger2019causability}. The models often focus on the background or other confounding information in the images, which are considered irrelevant by the pathologists for classification \cite{barnett2021case}. For DL to be considered reliable in high-stake applications like medicine, it is important that the models are trained to focus on medically relevant information with their reasoning process being transparent. Making the architectures sufficiently interpretable is even more challenging in case of prostate WSI grading. This is because multiple cancerous patterns can co-exist in a WSI with the predictions being based on aggregated patch-level features/ predictions in MIL \cite{klauschen2024toward, keshvarikhojasteh2025beyond}.

Interpretability techniques in literature \cite{barnett2021case} can be broadly categorized into posthoc, prototype-based and attention-based interpretable models. Posthoc interpretability analysis \cite{krebs2025beyond, manz2025explainable} reflects the 
decision-making behavior of trained CNNs by highlighting the image regions or attributes having strong influence on the predictions. However, it does not uncover the exact computational logic or feature interactions that drives a decision. Furthermore, the output can be sensitive to noise, model architecture, and layer selection -- sometimes, resulting in misleading explanation \cite{adebayo2018sanity, kindermans2019reliability}.

Unlike posthoc methods that generate explanations after classification, the prototype-based inherently interpretable networks \cite{chen2019looks, barnett2021case} integrate interpretability directly into their architecture. They learn a set of class-specific prototypes during training, each representing a characteristic visual pattern of its class. The network classifies an unseen image according to its similarity to these learned prototypes, effectively following a case-based reasoning strategy.  The attention-based networks \cite{ilse2018attention, mao2025camil} are an intermediary between the posthoc and prototype-based models.  Their attention mechanism is a part of the computation rather than posthoc addition. Although attention maps explicitly indicate which regions or features a model focuses on, they fail to point prototypical cases similar to the parts to which  higher attention weights get assigned.

Researchers have experimented with different posthoc and attention-based interpretability techniques for explaining prostate cancer classification. Occlusion-based posthoc interpretability methods remove or distort meaningful image information to  analyze its impact on performance. While Ref. \cite{provenzano2022exploring} applied global occlusions such as contrast reduction, thresholding, Canny edge detection; Ref. \cite{gallo2023shedding} systematically blocked different regions of the input image to evaluate their contribution to the classification of prostate cancer. However, multiple forward passes to explain each patch from WSI makes such occlusion procedure time-consuming. Several researchers \cite{krebs2025beyond, manz2025explainable} applied single-pass posthoc techniques such as saliency mapping, guided backpropagation, and class activation mapping (CAM) \cite{selvaraju2017grad} to address this issue. In addition, simple posthoc visualization by taking patch-level cancer probabilities predicted by a MIL model and projecting them back onto the WSI to visualize tumour locations was also used in Ref. \cite{silva2021self, yang2023devil}. Use of attention-based MIL frameworks has recently become increasingly common in classifying histopathology images, including prostate cancer \cite{wang2022scl, xiang2023automatic, chaurasia2025generalised, rahman2025cemil}. Attention weights highlight diagnostically relevant tissue regions and guide decision-making in these frameworks. The forward computation of these models involve attention maps providing coarse yet intuitive visual explanation.

Existing literature does not report adapting prototype-based inherently interpretable networks for improved interpretability in prostate cancer grading systems. 
Post-hoc and attention-based approaches highlight regions that influence the model output. However, they do not reveal why those regions matter, what visual patterns the model relies on, or how those patterns relate to medically relevant examples in the training set. Absence of explicit reasoning limits trust, particularly in high-stake diagnosis where spurious correlations or background artifacts can easily mislead a model. 
Prototype-based models address this gap by embedding human-understandable visual concepts (prototypes) directly into the network decision pathway. This allows predictions to be grounded in prototypical Gleason patterns learned from real patient slides. Such models reason in a way that more closely mirrors the workflow of a  pathologist—comparing suspicious regions to previously seen and clinically validated examples—thereby, ensuring that decisions arise from medically relevant features rather than contextual bias. 

However, adapting the prototype-based network for prostate cancer grading is challenging for two main reasons. First, it is difficult to adopt it to the weakly-supervised MIL setting required for prostate cancer grading. The primary reasons are the domain shift between patch and slide-level distributions, and  the co-existence of multiple cancerous patterns in each WSI. Second, the number of prototypes being a hyperparameter, all prototypes may not be equally informative. Consequently, it becomes necessary to emphasize the more relevant prototypes and suppress the irrelevant ones for optimal performance. The static pruning mechanisms  \cite{chen2019looks, rymarczyk2022interpretable} completely eliminate the less important prototypes and/or share the common prototypes with other classes. While effective in some natural image settings, these strategies are inherently inflexible. They permanently remove prototypes and limit the ability of a model to adapt to image-specific variability. Such lack of flexibility is particularly restrictive in medical imaging, where substantial inter-slide heterogeneity is common. Moreover, prototype sharing is not well suited to prostate cancer grading, where each Gleason grade corresponds to distinct morphological characteristics.

To address these challenges, we propose a novel Attention Driven Adaptive Prototype Thresholding (ADAPT) framework for improved interpretability in grading of Prostate Cancer from WSIs. Initially, the network is pre-trained at patch-level to learn the robust prototypical features associated with each Gleason grade. Thereafter, it is innovatively fine-tuned for grading WSIs using a new loss function in an MIL setting. Finally, a novel attention-based dynamic pruning mechanism is introduced to selectively emphasize the relevant prototypes and improve performance. Schematic diagram of the workflow is provided in Fig. \ref{fig:workflow}, with main contributions summarized below. 

\begin{figure*}[!ht]
    \centering
    \includegraphics[width=0.80\textwidth, trim=0cm 0.2cm 0.2cm 0cm,
        clip]{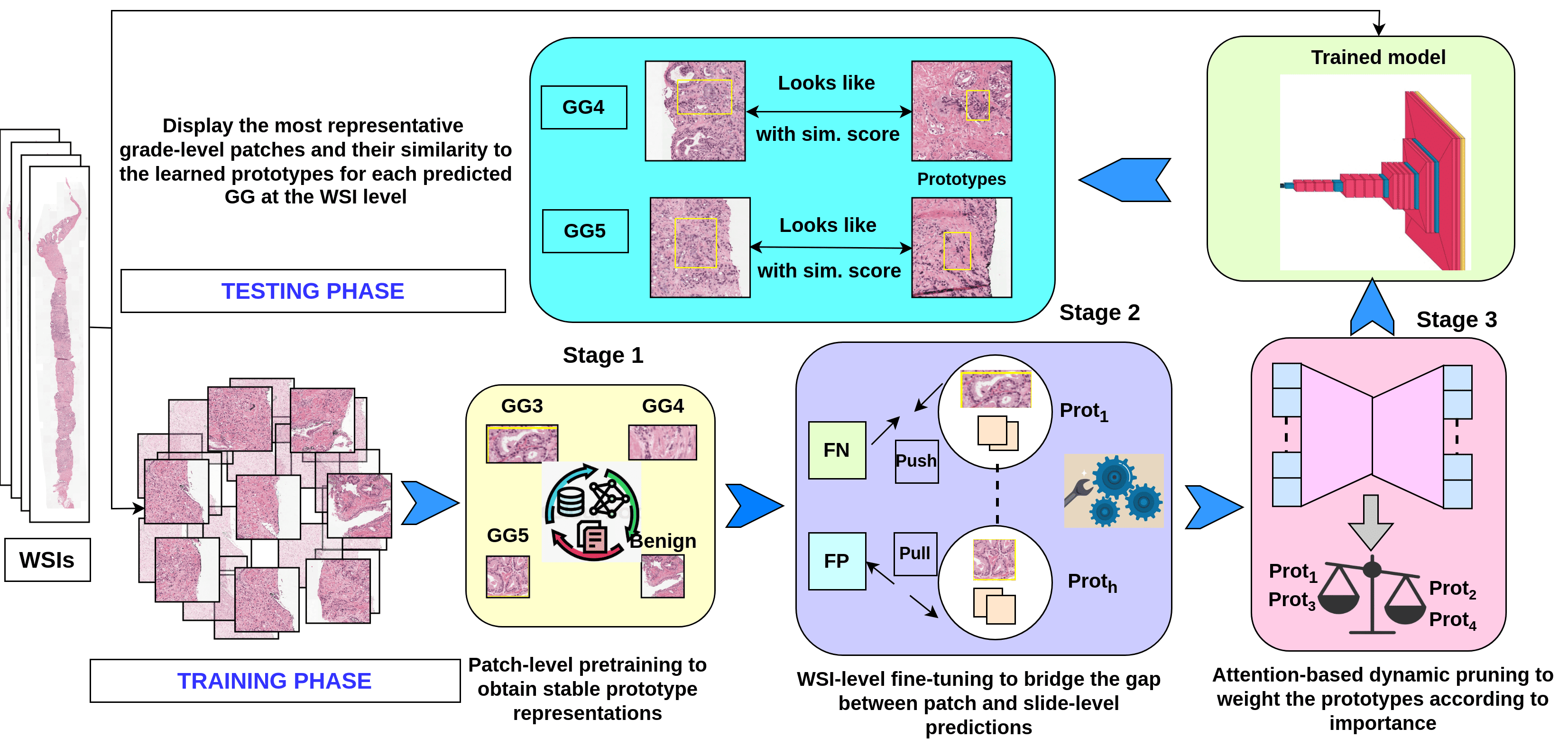}
    \caption{Schematic representation of the workflow of the ADAPT framework}
    \label{fig:workflow}
\end{figure*}

\begin{enumerate}
    \item Patch-level pretraining: Helps obtain stable prototype representations for each GG;  thereby, accelerating WSI-level convergence in a weakly supervised framework.
    
    \item Fine-tuning the patch-level prototype model at WSI-level:  A novel loss function, introduced in an MIL framework, adapts to slide-level aggregation rules and aligns domain shift between patch and slide distributions. The prototype space is adapted to capture histopathological patterns of WSI-level labels, to bridge the gap between patch decision and slide-level diagnosis.

    \item  Incorporating a new attention-based dynamic prototype pruning mechanism: Emphasizes relevant prototypes to promote class-wise exclusivity and enforce sparsity (few useful prototypes per class); thereby, reducing redundancy and improving interpretability.

    \end{enumerate}
    
Extensive validation is performed on the multi-center PANDA challenge dataset \cite{kagglepanda} and the SICAPv2 dataset \cite{silva2020going} to assess the performance and generalizability of the ADAPT framework. The results confirm the robustness of our framework and underscore its promise as a reliable assistive tool for pathologists in routine diagnostic workflows.
The rest of the article is organized as follows. Section \ref{sec:method} describes the working principle of our prototype-based interpretable framework for prostate cancer grading. Section \ref{sec:exp_dis} presents the experimental details along with the quantitative and qualitative results to demonstrate the efficacy of the proposed framework. Finally, Section \ref{sec:conc} concludes the article.

\section{Methodology}
\normalcolor
\label{sec:method}

This section discusses the problem statement, followed by a description of the three stages in the ADAPT framework.

\subsection{Problem statement}

Let $\{s_n\}_{n=1}^{N}$ denote a collection of $N$ prostate biopsy WSIs.  Each $s_n$ is expressed as a bag of $l$ patches $(s_n = \{x_{n,i}\}_{i=1}^{l})$, where $x_{n,i}$ represents the $i$-th patch extracted from the $n$-th WSI. Every WSI has two labels corresponding to the primary and secondary GG denoted by $z_{n,pg}$ and $z_{n,sg}$ respectively, where $z_{n,pg}, z_{n,sg} \in \{3,4,5\}$. Hence, the WSI grading task is formulated as a multilabel binary classification problem, where the target label for each WSI $s_n$ is encoded as $\mathbf{y}_n = \{ y_n^c \}_{c \in \{3,4,5\}}$. Here, the index $c$ denotes a GG, and 
$y_n^c = 1$ if the WSI $s_n$ exhibits grade $c$ in $z_{n,pg}$ or $z_{n,sg}$, and $y_n^c = 0$ otherwise. In addition, there is an auxiliary patch-level annotated dataset $PD =  \{(x_{n,i}, z_{n,i})\} \subset \{s_n\}_{n=1}^{N}$ where $z_{n,i}$ denotes the GG of each patch $x_{n,i}$. The aim is to develop a prototype-based DL model that not only predicts the GG w.r.t each WSI but also points out prototypical cases similar to regions of the test images, based on which the GGs are predicted. Our novel framework accomplishes this goal by using three distinct stages, as described next. 

\begin{figure*}[!ht]
    \centering
    \includegraphics[width=0.85\textwidth]{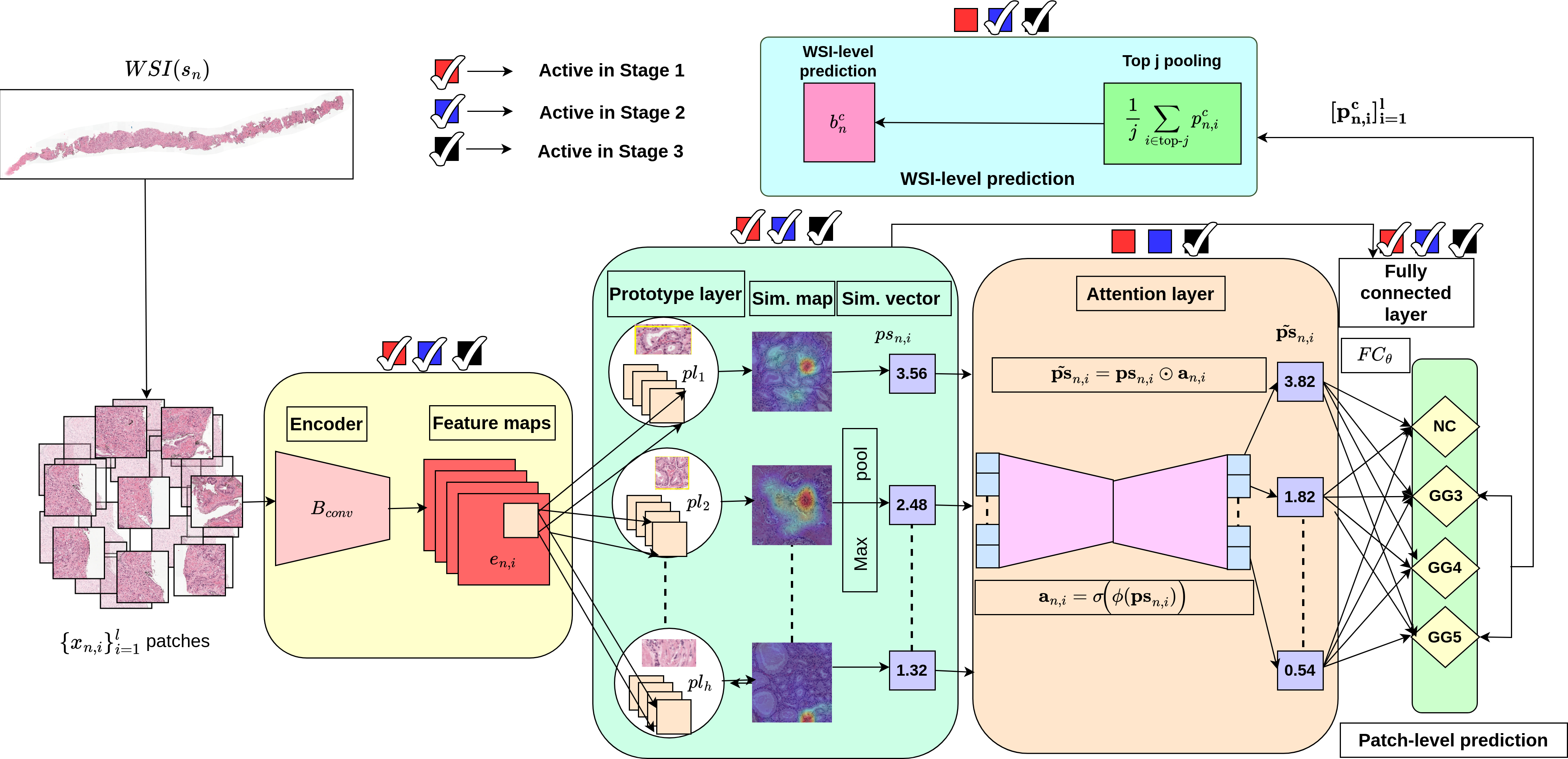}
    \caption{The architecture of the ADAPT framework. During stage 1 patch-level pretraining, the attention and WSI-level prediction modules are not needed. For stage 2 WSI-level fine-tuning, the attention module is not required. Stage 3 uses all the modules.}
    \label{fig:framework}
\end{figure*}

\subsection{Stage 1: Patch-level pretraining}

Prototype discovery solely in a weakly supervised framework like MIL is unstable and also sensitive to class imbalance. Thus, patch-level pretraining provides a good initialization for the WSI-level grading (in the MIL framework) and is also necessary to shape the prototypes into semantically meaningful Gleason-specific morphological pattern detectors.

\noindent \textbf{DL model design}: The DL model used in this stage comprises a traditional CNN backbone ($B_{conv}$) (described in Sec. \ref{sec:exp_dis}), followed by a prototype layer ($pl$) and the final fully-connected (FC) layer ($FC_{\theta}$) with four output nodes (corresponding to benign and GGs 
3-5), as depicted in Fig. \ref{fig:framework}. The prototype layer $pl = \{pl_g\}_{g=1}^h$ consists of total $h$ prototypes, where the number of prototypes allocated for each class is a hyperparameter. If the output feature map produced by $B_{conv}$ has height $H$, width $W$ and depth $D$,  the shape of each $pl_g$ is $H_1 \times W_1 \times D$ where $H_1 \le H, W_1 \le W$.  Thus, each prototype encodes a representative activation pattern for a particular class learned during training. Consequently, these prototypes are visualized as the training image patch where they are most evident.

Given an input image $x_{n,i}$, the $B_{conv}$ produces a feature map or embedding $e_{n,i}$. Consequently, the prototype layer computes the squared $L_2$ distance between the $g$th prototype $pl_{g}$ and all non-overlapping regions of $e_{n,i}$ having same spatial dimensions as $pl_{g}$. These distances are converted into similarity values to produce a similarity activation map, where higher scores indicate a stronger presence of the prototypical pattern in the image. Conceptually, if the input image $x_{n,i}$ contains a particular Gleason pattern, then the sub-regions within the corresponding feature maps $e_{n,i}$ will encode these structures and lie close to one or more prototypes representing that particular Gleason pattern. As a result, the similarity scores between those image portions and the relevant prototypes will be high. Next, the activation map generated by each prototype unit $pl_{g}$ is aggregated via global max pooling into a single similarity value. It reflects the strongest presence of that prototypical feature in the input image.

Finally, the $h$ similarity scores generated by $pl$ are passed through the fully connected layer $FC_{\theta}$, where they are linearly combined using the weight matrix $\theta$ to form the output logits. These logits are then normalized with softmax to obtain the probability of $x_{n,i}$ to have a certain GG.

\noindent \textbf{Training strategy:} The training strategy of the above DL model comprises of the following three phases, which are repeated until convergence.

\noindent \textbf{Phase 1}. First,  $B_{conv}$ and $pl_g$ are jointly optimized while keeping $FC_{\theta}$ fixed. This ensures that the discriminative features, corresponding to a particular GG, move closer to prototypes of that class in latent space and away from the others. The $FC_{\theta}$ layer is kept fixed by assigning a positive weight (1) between the logit of a particular class and its corresponding prototypes, and a negative weight (-0.5) between this class logit and the remaining non-class prototypes. The network is encouraged to learn a meaningful latent space by initially constraining $FC_{\theta}$ in this manner. The intuition is that if a latent patch from an image having a particular GG lies too close to a prototype associated with a different GG, the predicted probability for that grade would decrease. This would lead to a higher cross-entropy loss during training. The total training loss w.r.t. each patch $x_{n,i}$ is defined as
\begin{equation}
\begin{aligned}
L_{\text{patch}} =\; & CE\!\left(FC_{\theta}\!\left[pl\!\left\{B_{\text{conv}}(x_{n,i})\right\}\right], z_{n,i}\right) + \lambda_{\text{clst}}\, clst\\
& + \lambda_{\text{sep}}\, sep,
\end{aligned}
\label{eqn:loss_patch}
\end{equation}
\noindent where CE denotes Cross Entropy Loss between the predicted GG and the corresponding patch label, the cluster ($clst$) and separation ($sep$) losses are defined by eqn. (\ref{clsep}), with hyperparameters $\lambda_{clst}$, $\lambda_{sep}$ adjusting their respective contributions. 
\begin{equation}
\label{clsep}
\begin{aligned}
clst &= \min_{g : \mathbf{pl}_g \in \mathbf{pl}_{z_{n,i}}}
               \; \min_{\, f \in \text{patches}(e_{n,i})}
               \; \| f - \mathbf{pl}_g \|_2^{2}, \\
sep  &= - \min_{g : \mathbf{pl}_g \notin \mathbf{pl}_{z_{n,i}}}
               \; \min_{\, f \in \text{patches}(e_{n,i})}
               \; \| f - \mathbf{pl}_g \|_2^{2}.
\end{aligned}
\end{equation}

\noindent \textbf{Phase 2}. Each prototype $pl_g$ is pushed onto the closest latent patch belonging to the same class as itself. This aligns every prototype with its nearest real patch, allowing it to be directly interpreted as a representative example of the training data.

\noindent \textbf{Phase 3}. The aim is to adjust the parameters of only the $FC_{\theta}$ layer, while keeping the $B_{conv}$ and $pl$ layers fixed. This further enhances the performance without affecting the learned latent space or prototypes.

\subsection{Stage 2: WSI-level fine-tuning}

The patch-level pretrained model (from Stage 1) is fine-tuned for WSI-level grading in an MIL framework. This ensures learning of robust slide-level aggregation rules while fixing domain shift between the patch-level and slide-level distributions. This helps correct interpretation of the relative importance of the most confident patches by aligning prototype activations with WSI-level evidence, while avoiding any overemphasis on the ambiguous/noisy patches.

The DL model in our MIL setup consists of the Stage 1 network (comprising of the backbone, prototype, and fully-connected layers) followed by a WSI-level prediction module. It can be trained end-to-end using WSI labels, as depicted in Fig. \ref{fig:framework}. First, the stage 1 patch-level classifier (i.e., the pretrained patch-level prototype network) estimates, for each patch $x_{n,i}$ of slide $s_n$, the probability $p_{n,i}^c$ of belonging to one of the four classes or Gleason grades $c$. Here, $c=0$ represents the benign tissue, while $c=3,4,5$ correspond to GG3, GG4, and GG5, respectively. Then, the WSI-level prediction module aggregates these patch-level predictions into WSI-level scores. For each Gleason grade $c \in \{3,4,5\}$, the set of probabilities $\{p_{n,i}^c\}_{i=1}^{l}$ is converted into a single WSI-level probability $b_n^c$ by averaging the top $j$ most confident patch predictions.
\begin{equation}
\label{avgtopk}
    [b_n^c] = \frac{1}{j} \sum_{i \in \text{top-}j} p_{n,i}^c.
\end{equation}
Here, the operator top-$j$ selects the $j$ highest values among $p_{n,i}^c$ of $l$ patches in each WSI. The top-$j$ averaging provides a robust compromise between two extremes. While averaging over all patches can dilute the contribution of small but clinically relevant tumour regions, relying solely on the maximum patch score can be highly sensitive to outliers or spurious high-confidence predictions. Averaging the top $j \in [3,7]$ probabilities is found to provide resilience to single-patch noise; thereby, leading to a more stable and reliable WSI-level probability estimate. 

This WSI-level DL model is now fine-tuned using a novel loss function. Alongside the standard WSI-level multilabel Binary Cross-Entropy (BCE) loss between predicted bag-based probability $\mathbf{[b_n^c]}$ at WSI-level and the one-hot encoded target output $\mathbf{[y_n^c]}$, we introduce two complementary prototype-aware regularizers to correct any systematic failure modes arising during aggregation. This includes a Positive alignment loss that helps the model recover missed evidence of true classes, and a Negative repulsion loss that suppresses misleading evidence responsible for false positives, as illustrated in Fig. \ref{fig:wsiloss}.

\begin{figure}[!ht]
    \centering
    \includegraphics[width=0.45\textwidth]{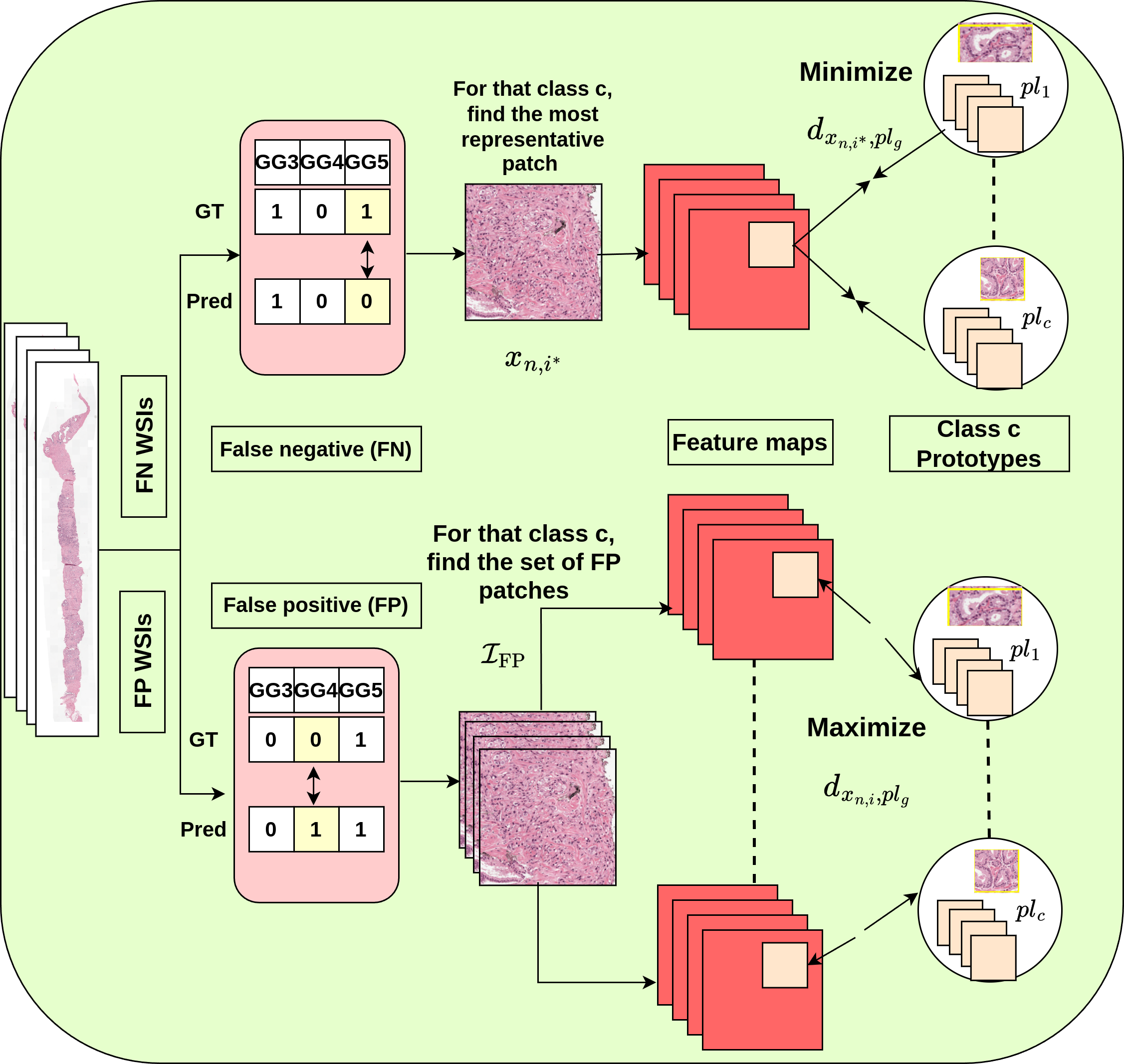}
    \caption{Training strategy for WSI-level fine-tuning}
    \label{fig:wsiloss}
\end{figure}

\noindent \textbf{Positive alignment loss}: When the model fails to detect a true GG at the slide level, it typically means that patches containing relevant morphological patterns are not sufficiently aligned with the prototypes assigned to that grade. To address this factor, the Positive alignment loss encourages the most representative patches in the slide to move closer to the correct class prototype(s).

Let $d_{x_{n,i},pl_g}$ denote the distance between features of patch $x_{n,i}$ and prototype $pl_g \in pl_c$, where $pl_c$ represents the set of prototypes associated with class $c$. For a given slide $s_n$, let $x_{n,i^*}$ denote the patch with the highest predicted probability for class $c$ when the aggregated slide-level probability still remains below $0.5$ (corresponding to a false-negative detection). Positive alignment loss is formulated as
\begin{equation}
\mathcal{L}_{\mathrm{align}} = \frac{1}{N_{\mathrm{FN}}} \sum_{\text{FN slides}}\min_{pl_g \in pl_c} d_{x_{n,i^*},pl_g},
\end{equation}
where $N_{\mathrm{FN}}$ denotes the number of false-negative WSIs. This term encourages the model to discover appropriate class-specific regions which got overlooked during aggregation.

\noindent \textbf{Negative repulsion loss}: Conversely, false positives can occur when certain patches spuriously activate prototypes of an incorrect class. To prevent such misleading activations from influencing the WSI prediction, the negative repulsion loss pushes such patches (with probability $>$ 0.5) away from the prototypes they incorrectly resemble. Let $\mathcal{I}_{\mathrm{FP}}$ denote the set of indices of false-positive patches in $s_n$, defined as those patches whose predicted probability for an incorrect class exceeds 0.5.
The loss is defined as
\begin{equation}
\mathcal{L}_{\mathrm{repel}} = - \frac{1}{N_{\mathrm{FP}}} \sum_{\text{FP slides}} \left(
\frac{1}{|\mathcal{I}_{\mathrm{FP}}|} \sum_{i \in \mathcal{I}_{\mathrm{FP}}} \min_{pl_g 
\in pl_c} d_{x_{n,i},pl_g} \right),
\end{equation}
where $N_{\mathrm{FP}}$ denotes the number of false-positive WSIs. This explicitly increases the distance between unreliable patches and class prototypes; thereby, reducing the likelihood of model-driven false alarms at the slide level.

The WSI-level loss function is expressed as 
\begin{equation}
\mathcal{L}_{\text{WSI}} = \mathcal{L}_{\text{BCE}} + \lambda_{\text{align}} \, \mathcal{L}_{\text{align}} + \lambda_{\text{repel}} \, \mathcal{L}_{\text{repel}},
\label{eq:wsi}
\end{equation}
where $\lambda_{\text{align}}$ and $\lambda_{\text{repel}}$ regulate the influence of the alignment and repulsion terms respectively. Together, these prototype-aware regularizers enhance the ability of the model to aggregate patch-level evidence in a stable, interpretable, and clinically grounded manner.  

\subsection{Stage 3: Attention-based dynamic prototype pruning}

After WSI-level fine-tuning in Stage 2, all prototypes are still considered as equally informative. Given that the number of prototypes is a hyperparameter, many learned prototypes can be redundant or capture non-discriminative patterns (like background or cross-class features). Thus, to enable the model to dynamically emphasize diagnostically relevant prototypes while suppressing the less informative ones, a learnable attention layer is introduced. It assigns relevance scores to the pretrained set of prototypes. A dynamic attention layer, as shown in Fig. \ref{fig:framework}, lies between the prototype layer $pl$ and the final fully-connected layer $FC_{\theta}$ to learn prototype-specific relevance weights corresponding to each patch. 

Given the prototype similarity vector $\mathbf{ps}_{n,i} \in \mathbb{R}^{K}$ for patch $x_{n,i}$ (from minimum patch–prototype distances), where $K$ denotes the total number of prototypes across all GGs, the attention module predicts per-prototype weights 
\begin{equation}
\mathbf{a}_{n,i} = \sigma\!\big\{\phi(\mathbf{ps}_{n,i})\big\} \in [0,1]^K,
\end{equation}
where $\phi(\cdot)$ denotes a lightweight two-layer MLP and $\sigma(\cdot)$ is the sigmoid activation. The weighted prototype representation is then computed as
\begin{equation}
\tilde{\mathbf{ps}}_{n,i} = \mathbf{ps}_{n,i} \odot \mathbf{a}_{n,i},
\end{equation}
and forwarded to the FC layer for classification.

We propose a novel classwise attention-discriminative loss that encourages (i) classwise exclusivity: a prototype should be active primarily for slides of its associated class and suppressed for negatives, and (ii) sparsity: only the few most relevant prototypes contribute to each class decision. This trains only the attention and FC layers, while keeping the prototypes fixed. For each class $c$, the top-$j$ most confident patches are first selected for each WSI $s_n$  based on their class probabilities
$T_{n,c}=\text{Top-}j\big(\{p_{n,i}^c\}_{i=1}^{l}\big).$
Their attention values over prototypes assigned to class $c$ (corresponding to $pl_c$) are averaged to obtain a WSI-level attention vector
\begin{equation}
\label{eq:att_vect}
\bar{\mathbf{a}}_{n,c} = \frac{1}{|T_{n,c}|} \sum_{i \in T_{n,c}} \mathbf{a}_{n,i}^{(c)},
\end{equation}
where $\mathbf{a}_{n,i}^{(c)}$ denotes the sub-vector of $\mathbf{a}_{n,i}$ corresponding to prototypes in $pl_c$.
For each grade $c$, the mean attention responses are computed, separately over the sets of positive WSIs ($\mathcal{S}_c^{+}$) and negative WSIs ($\mathcal{S}_c^{-}$) (based on whether grade $c$ is present or absent), as
$\boldsymbol{\mu}_c^{+}=\frac{1}{|\mathcal{S}_c^{+}|}\sum_{n\in\mathcal{S}_c^{+}}\bar{\mathbf{a}}_{n,c}$ and
$\boldsymbol{\mu}_c^{-}=\frac{1}{|\mathcal{S}_c^{-}|}\sum_{n\in\mathcal{S}_c^{-}}\bar{\mathbf{a}}_{n,c}$.
Prototypes exhibiting strong responses on both positive and negative samples are penalized through an overlap term
$\|\boldsymbol{\mu}_c^{+}\odot\boldsymbol{\mu}_c^{-}\|_{1}$, where $\odot$ denotes the Hadamard or element-wise product and $\|.\|_{1}$ refers to the L1 norm. Prototypes highly active on negative WSIs are suppressed using a negative-activity penalty
$\|\boldsymbol{\mu}_c^{-}\|_{1}$.
The classwise discriminative loss is then defined as
\begin{equation}
\mathcal{L}_c = w_c \big( \alpha \|\boldsymbol{\mu}_c^{+}\odot\boldsymbol{\mu}_c^{-}\|_{1}
+ \beta \|\boldsymbol{\mu}_c^{-}\|_{1} \big)
\end{equation}
with $w_c=|\mathcal{S}_c^{+}|/B$ denoting the fraction of positive WSIs for class $c$ in the batch and $\alpha,\beta$ controlling the relative strengths of the two penalties. Here, $w_c$ scales classwise loss according to the prevalence of class $c$ in the batch, ensuring stable and imbalance-aware optimization in multilabel setting.

\begin{lemma}
Let $\epsilon \ge 0$ be the convergence value of the classwise loss function such that $\mathcal{L}_c \le \epsilon$. The mean attention responses for positive and negative WSIs have disjoint support quantified by the bound on their inner product $\langle \boldsymbol{\mu}_c^+, \boldsymbol{\mu}_c^- \rangle \le \frac{\epsilon}{w_c \alpha}$. For $\epsilon \to 0$, the supports of the class representations become strictly disjoint $\text{supp}(\boldsymbol{\mu}^+) \cap \text{supp}(\boldsymbol{\mu}_c^-) = \emptyset$.
\end{lemma}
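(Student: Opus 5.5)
The plan is to use nonnegativity of every quantity involved. Attention weights come from a sigmoid, so $\mathbf{a}_{n,i}\in[0,1]^K$. Both $\bar{\mathbf{a}}_{n,c}$ in eqn. (\ref{eq:att_vect}) and $\boldsymbol{\mu}_c^{\pm}$ are averages of such vectors, hence nonnegative componentwise. This lets us drop absolute values: $\|\boldsymbol{\mu}_c^{+}\odot\boldsymbol{\mu}_c^{-}\|_{1}=\sum_k \mu^+_{c,k}\mu^-_{c,k}=\langle \boldsymbol{\mu}_c^+,\boldsymbol{\mu}_c^-\rangle$ and $\|\boldsymbol{\mu}_c^{-}\|_1=\sum_k\mu^-_{c,k}\ge 0$. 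We assume $w_c>0$ (class $c$ has at least one positive WSI in the batch, so $\boldsymbol{\mu}_c^+$ is defined), $\alpha>0$, $\beta\ge 0$, and $|\mathcal{S}_c^-|\ge 1$ so $\boldsymbol{\mu}_c^-$ is defined.

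First I rewrite $\mathcal{L}_c = w_c\big(\alpha\langle \boldsymbol{\mu}_c^+,\boldsymbol{\mu}_c^-\rangle+\beta\|\boldsymbol{\mu}_c^-\|_1\big)$. Both summands are nonnegative, so discarding the $\beta$ term gives $w_c\alpha\langle \boldsymbol{\mu}_c^+,\boldsymbol{\mu}_c^-\rangle\le \mathcal{L}_c\le\epsilon$. Dividing by $w_c\alpha>0$ yields the stated bound $\langle \boldsymbol{\mu}_c^+,\boldsymbol{\mu}_c^-\rangle\le \epsilon/(w_c\alpha)$.

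For the support claim I argue componentwise. Each term satisfies $0\le \mu^+_{c,k}\mu^-_{c,k}\le\langle \boldsymbol{\mu}_c^+,\boldsymbol{\mu}_c^-\rangle\le\epsilon/(w_c\alpha)$. In the limit $\epsilon\to 0$ (equivalently $\mathcal{L}_c=0$ at exact convergence, or along a sequence of iterates whose losses tend to $0$ with $w_c$ bounded below), every product $\mu^+_{c,k}\mu^-_{c,k}=0$. So for each prototype index $k$ at least one of $\mu^+_{c,k},\mu^-_{c,k}$ vanishes, i.e. no $k$ lies in both supports, giving $\text{supp}(\boldsymbol{\mu}_c^+)\cap\text{supp}(\boldsymbol{\mu}_c^-)=\emptyset$. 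Keeping the $\beta$ term as a side remark also gives $\|\boldsymbol{\mu}_c^-\|_1\le\epsilon/(w_c\beta)$ when $\beta>0$. This shows that in the limit $\boldsymbol{\mu}_c^-\to\mathbf{0}$, so disjointness holds trivially and more strongly.

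The main obstacle is making the limiting statement precise. For any fixed $\epsilon>0$, small components can still be nonzero on both sides, so strict disjointness only holds at $\epsilon=0$. Away from the limit, the honest finite-$\epsilon$ version is an approximate one: for any threshold $\tau>0$, the number of indices with both $\mu^+_{c,k}\ge\tau$ and $\mu^-_{c,k}\ge\tau$ is at most $\epsilon/(w_c\alpha\tau^2)$. I would state the limit for exact convergence $\mathcal{L}_c=0$, or along a convergent sequence of parameters using continuity of the products. I would add this thresholded version as the quantitative meaning of ``disjoint support quantified by the bound''.
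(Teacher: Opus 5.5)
Your proposal is correct and follows essentially the same route as the paper's proof: nonnegativity of the sigmoid-derived attention averages, discarding the $\beta$ term to get $w_c\alpha\|\boldsymbol{\mu}_c^{+}\odot\boldsymbol{\mu}_c^{-}\|_1\le\mathcal{L}_c\le\epsilon$, identifying that norm with $\langle\boldsymbol{\mu}_c^+,\boldsymbol{\mu}_c^-\rangle$, and concluding that every product $\mu^+_{c,k}\mu^-_{c,k}$ vanishes as $\epsilon\to0$. Your explicit hypotheses ($w_c$ bounded below, $\alpha>0$) and the thresholded count make the limit more precise than the paper does; one caution is that the sigmoid is strictly positive, so $\mathcal{L}_c=0$ is never attained at finite parameters and parameters converging to finite values cannot drive $\mathcal{L}_c\to0$. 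Hence the disjointness is best stated for limit points of $(\boldsymbol{\mu}_c^+,\boldsymbol{\mu}_c^-)$, which exist by compactness of the unit cube, rather than for a convergent parameter sequence.
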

\begin{proof}
The prototype attention weights are derived via strictly non-negative activation functions. Thus, $\boldsymbol{\mu}_c^{+}$ and $\boldsymbol{\mu}_c^{-}$ lie in the non-negative orthant
\begin{equation*}
    \boldsymbol{\mu}_{c,k}^{+} \ge 0, \quad \boldsymbol{\mu}_{c,k}^{-} \ge 0 \quad \forall k \in \{1, \dots, pl_c\}.
\end{equation*}
The loss function $L_{c}$ acts as an upper bound on its constituent terms. Therefore, for the overlap penalty term
\begin{equation} \label{eq:ineq}
    w_c \alpha ||\boldsymbol{\mu}_{c,k}^{+} \odot \boldsymbol{\mu}_{c,k}^{-}||_1 \le \mathcal{L}_c \le \epsilon.
\end{equation}
For non-negative vectors, the $L_1$ norm of the Hadamard product is mathematically identical to the Euclidean inner product. We expand the term
\begin{equation*}
    ||\boldsymbol{\mu}_{c,k}^{+} \odot \boldsymbol{\mu}_{c,k}^{-}||_1 = \sum_{k=1}^{pl_c} |\boldsymbol{\mu}_{c,k}^+ \cdot \boldsymbol{\mu}_{c,k}^-| = \sum_{k=1}^{pl_c} \boldsymbol{\mu}_{c,k}^+ \boldsymbol{\mu}_{c,k}^- = \langle \boldsymbol{\mu}_c^+, \boldsymbol{\mu}_c^- \rangle.
\end{equation*}
Substituting this into eqn. (\ref{eq:ineq}) yields the bound
\begin{equation} \label{eq:bound}
    \langle \boldsymbol{\mu}_c^+, \boldsymbol{\mu}_c^- \rangle \le \frac{\epsilon}{w_c \alpha}.
\end{equation}
Let $S^+ = \{k : \boldsymbol{\mu}_{c,k}^+ > 0\}$ and $S^- = \{k : \boldsymbol{\mu}_{c,k}^- > 0\}$ be the support sets (active prototypes) for the positive and negative classes, respectively. Rewriting eqn. (\ref{eq:bound}) as 
\begin{equation*}
    \therefore \langle \boldsymbol{\mu}_c^+, \boldsymbol{\mu}_c^- \rangle = \sum_{k \in S^+ \cap S^-} \boldsymbol{\mu}_{c,k}^+ \boldsymbol{\mu}_{c,k}^- + \sum_{k \notin S^+ \cap S^-} 0 \le \frac{\epsilon}{w_c \alpha}.
\end{equation*}
In ideal loss convergence scenario ($\epsilon \rightarrow 0$), we have $\sum_{k \in S^+ \cap S^-} \boldsymbol{\mu}_{c,k}^+ \boldsymbol{\mu}_{c,k}^- = 0$. Since $\boldsymbol{\mu}_{c,k}^+ \ge 0$ and $\boldsymbol{\mu}_{c,k}^- \ge 0$, the sum can be zero if and only if every individual term is zero. This necessitates that, for every prototype $k$,
\begin{equation*}
    \boldsymbol{\mu}_{c,k}^+ = 0 \quad \lor \quad \boldsymbol{\mu}_{c,k}^- = 0
    \implies \text{supp}(\boldsymbol{\mu}_c^+) \cap \text{supp}(\boldsymbol{\mu}_c^-) = \emptyset.
\end{equation*}
Thus, minimizing $\mathcal{L}_c$ guarantees no single prototype can be active for both positive and negative WSIs -- guaranteeing feature disjointedness.
\hfill $\blacksquare$
\end{proof}

\begin{lemma}
Let $\mathcal{L}_c^{neg} = w_c \beta ||\boldsymbol{\mu}_c^-||_1$ be the negative activity penalty term. For any prototype $k$ with a non-zero activation on negative samples ($\boldsymbol{\mu}_{c,k}^- > 0$), the magnitude of the sub-gradient satisfies $\left| \frac{\partial \mathcal{L}_c^{neg}}{\partial \boldsymbol{\mu}_{c,k}^-} \right| \ge w_c \beta$. This lower bound is independent of the activation magnitude $\boldsymbol{\mu}_{c,k}^-$, ensuring persistent suppression of weak background noise.
\end{lemma}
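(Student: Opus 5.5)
The plan is to differentiate $\mathcal{L}_c^{neg}$ directly, coordinate by coordinate. As in the previous lemma, we first use that the attention weights come from a sigmoid. Consequently every entry of $\bar{\mathbf{a}}_{n,c}$ lies in $[0,1]$, and since $\boldsymbol{\mu}_c^-$ is an average of such vectors over $\mathcal{S}_c^-$, we have $\boldsymbol{\mu}_{c,k}^- \ge 0$ for all $k$. This lets us write the $L_1$ norm without absolute values in the region of interest:
\begin{equation*}
\mathcal{L}_c^{neg} = w_c\beta \sum_{k} |\boldsymbol{\mu}_{c,k}^-| .
\end{equation*}
This expression is separable across prototypes, so the (sub)gradient with respect to $\boldsymbol{\mu}_{c,k}^-$ involves only the $k$-th summand. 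Here $w_c = |\mathcal{S}_c^+|/B$ and $\beta$ are treated as constants with respect to the attention values, because $w_c$ depends only on the labels in the batch.

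Next, fix a prototype $k$ with $\boldsymbol{\mu}_{c,k}^- > 0$. At such a point the map $t \mapsto |t|$ is differentiable with derivative $\operatorname{sign}(t) = 1$, so the subdifferential reduces to the singleton $\{w_c\beta\}$. This gives
\begin{equation*}
\frac{\partial \mathcal{L}_c^{neg}}{\partial \boldsymbol{\mu}_{c,k}^-} = w_c\beta ,
\end{equation*}
and hence $\left|\partial \mathcal{L}_c^{neg}/\partial \boldsymbol{\mu}_{c,k}^-\right| = w_c\beta \ge w_c\beta$, with equality. Combining the two steps: wherever $\boldsymbol{\mu}_{c,k}^- > 0$ the derivative is positive, so the inequality holds in the form stated.

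Finally, we record the independence claim. The derivative is the constant $w_c\beta$ regardless of how small $\boldsymbol{\mu}_{c,k}^-$ is. To make the contrast explicit, we compare with a squared penalty $w_c\beta(\boldsymbol{\mu}_{c,k}^-)^2$, whose gradient $2w_c\beta\boldsymbol{\mu}_{c,k}^-$ vanishes as the activation tends to zero. This is the sense in which the $L_1$ penalty keeps suppressing weak background activations.

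The only delicate step is the one at the boundary $\boldsymbol{\mu}_{c,k}^- = 0$, where the subdifferential is the interval $w_c\beta[-1,1]$ and the lower bound fails. The hypothesis $\boldsymbol{\mu}_{c,k}^- > 0$ excludes exactly this case, so the proof should state that the claim concerns the interior of the orthant. A secondary point is the sigmoid: it is never exactly zero, so in practice $\boldsymbol{\mu}_{c,k}^- > 0$ always holds whenever $\mathcal{S}_c^-$ is nonempty. This makes the bound apply to every prototype during training and strengthens the "persistent suppression" interpretation. We should also note that the bound is with respect to $\boldsymbol{\mu}_{c,k}^-$ itself, not the MLP parameters, whose gradients are further scaled by the sigmoid derivative through the chain rule.
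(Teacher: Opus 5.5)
Your proof is correct and follows essentially the same route as the paper's. Both use non-negativity of the sigmoid-derived attention to write the $L_1$ norm as a linear sum, then differentiate to get the constant $w_c\beta$, so the bound holds with equality and is independent of $\boldsymbol{\mu}_{c,k}^-$. Your extra remarks go beyond the paper and are accurate caveats on how far the ``persistent suppression'' reading extends: the boundary case $\boldsymbol{\mu}_{c,k}^-=0$, and the fact that gradients with respect to the MLP parameters are further scaled by the sigmoid derivative.
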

\begin{proof} 
The negative activity penalty is defined as $\mathcal{L}_c^{neg} = w_c \beta \sum \boldsymbol{\mu}_{c,k}^-$ (since $\boldsymbol{\mu}_{c,k}^- \ge 0$). For any active prototype $k$, the gradient is defined as
\begin{equation}
    \frac{\partial\mathcal{L}_c^{neg}}{\partial \boldsymbol{\mu}_{c,k}^-} = w_c \beta.1 \implies \left| \frac{\partial \mathcal{L}_c^{neg}}{\partial \boldsymbol{\mu}_{c,k}^-} \right| = w_c \beta.
\end{equation}
The derived magnitude satisfies the lower bound condition ($\ge w_c \beta$). This expression is constant and independent of the value of $\boldsymbol{\mu}_{c,k}^-$, guaranteeing a persistent suppression even for infinitesimally small activations. Therefore, weak background noise is effectively driven to zero.
\hfill $\blacksquare$
\end{proof}

The final attention regularization term is obtained by averaging across the $\mathcal{C}$ number of GG classes as 
\begin{equation}
\mathcal{L}_{\text{attn}} = \frac{1}{|\mathcal{C}|} \sum_{c\in\mathcal{C}} \mathcal{L}_c,
\label{eq:attn}
\end{equation}
and adding to the WSI-level BCE loss to train the attention and FC layers. This formulation promotes selective, class-consistent prototype usage, and improves both interpretability and robustness of WSI-level predictions. 

After training the dynamic attention module, we quantify the relevance of each prototype to individual Gleason grades by computing a classwise prototype importance score from the learned attention weights. For each class $c$ prototype, the WSI-level attention vectors [eqn. (\ref{eq:att_vect})] are averaged across all WSIs positive for class $c$. This yields a mean attention score, per class $c$ prototype, reflecting its global contribution to that grade. The resulting scores are used to rank and visualize prototypes based on their  importance for each class; thereby, providing a quantitative measure of prototype relevance and enabling improved interpretability of class-specific model decisions. Loss functions for each of the stages,  $\mathcal{L}_{\text{patch}}$ 
[eqn. (\ref{eqn:loss_patch})], $\mathcal{L}_{\text{WSI}}$ [eqn. (\ref{eq:wsi})], $\mathcal{L}_{\text{attn}}$ [eqn. (\ref{eq:attn})], and their corresponding component weights, are summarized in Table \ref{tab:imp_det}.

\section{Experimental Results and Discussion}
\label{sec:exp_dis}

Here we describe the dataset characteristics, implementation details, and evaluation metrics, followed by quantitative and qualitative results with detailed analysis and discussion.

\noindent \textbf{Dataset description:} The ADAPT framework was trained using the multicenter prostate WSI dataset from the Kaggle Prostate cANcer graDe Assessment (PANDA) Challenge \cite{kagglepanda} and tested on both the PANDA and the SICAPv2 \cite{silva2020going} datasets. Table \ref{tab:dataset_dist} illustrates the dataset composition used at patch-level and WSI-level. The patch-level dataset includes tissue patches from all GGs in the PANDA dataset, while the WSI-level PANDA and SICAP datasets are summarized by Gleason Score (GS) categories (6–10) for prostate cancer grading. The GS for each WSI is the sum of primary and secondary GGs. Training, validation, and test splits were performed in a ratio of 70:10:20 for the PANDA dataset, while the 
out-of-distribution SICAP dataset was used entirely for testing.

\begin{table}[!ht]
\centering
\caption{Dataset distribution at (a) patch-level and (b) WSI-level}
\label{tab:dataset_dist}

\begin{minipage}[t]{0.45\linewidth}
\centering
\textbf{(A) Patch-level dataset}

\vspace{2mm}
\begin{tabular}{|l|l|}
\hline
\textbf{Grade} & \textbf{\# Patches} \\
\hline
Benign & 4000 \\ \hline
GG3 & 4247 \\ \hline
GG4 & 4186 \\ \hline
GG5 & 4039 \\ \hline
\hline
\end{tabular}
\end{minipage}
\hfill
\begin{minipage}[t]{0.45\linewidth}
\centering
\textbf{(B) WSI-level dataset}

\vspace{2mm}
\begin{tabular}{|r|c|c|}
\hline
\textbf{GS} & \multicolumn{1}{c|}{\textbf{\begin{tabular}[c]{@{}c@{}}WSIs\\ (PANDA)\end{tabular}}} & \multicolumn{1}{c|}{\textbf{\begin{tabular}[c]{@{}c@{}}WSIs\\ (SICAP)\end{tabular}}} \\ \hline
6           & 1192                                                                                 & 14                                                                                   \\ \hline
7           & 1826                                                                                 & 45                                                                                   \\ \hline
8           & 1173                                                                                 & 18                                                                                   \\ \hline
9           & 1061                                                                                 & 35                                                                                   \\ \hline
10          & 109                                                                                  & 7                                                                                    \\ \hline
\end{tabular}
\end{minipage}

\end{table}

\noindent \textbf{Implementation details: }The ADAPT framework was implemented in PyTorch (version 1.13.1) with Python 3.9 and trained using a dedicated NVIDIA RTX A6000 GPU (48 GB memory). EfficientNet-B0 \cite{tan2019efficientnet} was adopted as the backbone network ($B_{conv}$) due to its lightweight architecture and computational efficiency. Patches of size $256\times256$ were extracted from each WSI and the background-dominated patches were eliminated to ensure sufficient tissue content per WSI. The Adam optimizer was used in all the stages, with the rest of the implementation details summarized in Table \ref{tab:imp_det}. The values of $\lambda_{\text{clst}}$ and $\lambda_{\text{sep}}$ were chosen following Ref. \cite{chen2019looks}. 

The terms $\lambda_{\text{align}}$ and $\lambda_{\text{repel}}$ in Stage 2 act as auxiliary regularizers that are activated only for misclassified slides. These were therefore kept one to two orders of magnitude smaller than the primary BCE loss to avoid dominating the optimization. The alignment loss was assigned a slightly higher weight, as recovering missed class evidence in FN slides is more critical than suppressing the occasional FP patches (which may arise from local noise or morphological overlap). The overlap penalty weight $\alpha$ was chosen from a moderate range, to strongly penalize prototypes that receive high attention from both positive and negative WSIs; thereby, encouraging class-exclusive prototype usage. In contrast, the negative activity penalty weight $\beta$ was selected from a lower range to softly suppress residual prototype activations on negative WSIs without excessively diminishing weak but potentially informative responses. Maintaining $\alpha>\beta$ ensured that the prevention of cross-class prototype overlap remained the dominant objective, while negative activity suppression acted as a secondary regularizer for improved robustness. 

\begin{table}[!ht]
\caption{Implementation details of the ADAPT framework}
\centering
\begin{tabular}{|l|l|l|c|l|}
\hline
\textbf{Stage} & \textbf{Epochs} & \textbf{Batch size} & \textbf{Loss weights}                                                        & \textbf{Learning rate (LR)}                                                        \\ \hline
1              & 30              & 64                  & \begin{tabular}[c]{@{}c@{}}$\mathcal{L}_{\text{patch}}$:\\ ($\lambda_{\text{clst}}\:0.8, \lambda_{\text{sep}}\:0.08$ \cite{chen2019looks})\end{tabular}    & \begin{tabular}[c]{@{}l@{}}Cosine annealing\\ (Initial LR: 3e-4)\end{tabular}      \\ \hline
2              & 25              & 8                   & \begin{tabular}[c]{@{}c@{}}$\mathcal{L}_{\text{WSI}}$:\\ ($\lambda_{\text{align}}: [0.03,0.1], \lambda_{\text{repel}}: [0.01,0.03]$)\end{tabular} & \begin{tabular}[c]{@{}l@{}}Cosine annealing\\ (Initial LR: 1e-5-1e-3)\end{tabular} \\ \hline
3              & 20              & 8                   & \begin{tabular}[c]{@{}c@{}}
$\mathcal{L}_{\text{attn}}$:\\ ($\alpha$: [0.5,2], $\beta$: [0.1,1])\end{tabular} & \begin{tabular}[c]{@{}l@{}}Cosine annealing\\ (Initial LR: 1e-3)\end{tabular}   \\ \hline  
\end{tabular}
\label{tab:imp_det}
\end{table}

\noindent \textbf{Performance metrics:} The quantitative performance of the framework was assessed using the class-wise F1 score \cite{sokolova2009systematic} and the Hamming loss \cite{zhang2013review}. Together, these metrics provide complementary insight with F1 highlighting class-specific grading reliability and Hamming loss reflecting overall labelling consistency under the multilabel WSI grading setting. Qualitative interpretability was examined through visualization w.r.t. the corresponding Gleason-annotated masks, allowing verification of prototype localization and attention-weighted evidence regions used.

\subsection{Quantitative analysis}

Ablation study was conducted on the PANDA dataset to evaluate the contributions of the three stages of the patch-level pretraining (Stage 1), WSI-level fine-tuning (Stage 2), and attention-based dynamic prototype pruning (Stage 3) of the ADAPT framework, across different prototype counts ranging between 3–6. Performance was assessed using macro F1 scores and Hamming loss at the WSI level, as depicted in Fig. \ref{fig:ablation}. Consistent improvements were observed from Stage 1 through Stage 3, in all prototype configurations, demonstrating the importance of WSI-level optimization and attention refinement. In particular, WSI-level fine-tuning yielded a substantial gain over patch-level pretraining alone (e.g., F1 increased from approximately 0.62–0.65 to 0.77–0.81, with a concurrent reduction in Hamming loss from 0.38–0.44 to 0.19–0.24). This highlights the need to learn slide-level aggregation rules beyond localized 
patch-level supervision. The introduction of the attention module provided incremental but consistent benefits, by suppressing less informative prototypes and emphasizing class-relevant evidence. This led to further F1 improvement, with lowest overall Hamming loss across configurations.

\begin{figure}[!ht] 
    \centering
    \includegraphics[width=1\linewidth]{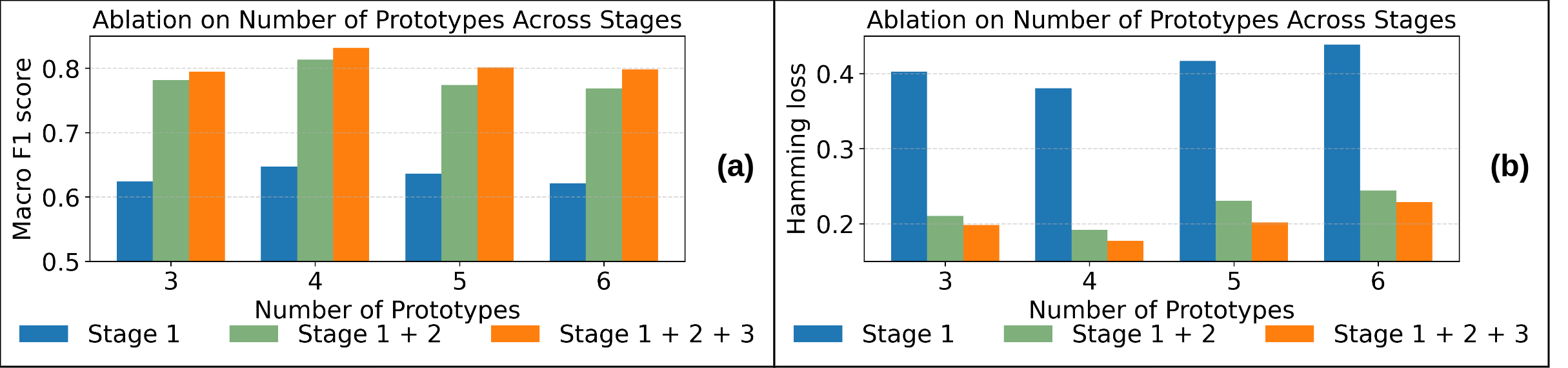}
    \caption{Ablation study evaluating the impact of the training stages and prototype counts on WSI-level Gleason grading performance over the PANDA dataset w.r.t. (a) Macro F1 score, and corresponding (b) Hamming loss}
    \label{fig:ablation}
\end{figure}

Analysis of prototype counts indicates that the model achieved optimal performance with four prototypes per class. Using only three prototypes appeared to be too restrictive, limiting the ability of the model to capture the morphological heterogeneity of prostate cancer patterns. Conversely, increasing the number of prototypes to five or six introduced additional model complexity with degraded performance, due to the emergence of redundant and non-discriminative prototypes having little complementary information. Fig. \ref{fig:prot_frac} shows that an increase in the number of prototypes per class led to a rise in the fraction of low-attention prototypes (weight$<$0.5). This indicates increasing redundancy and reduced marginal utility with additional prototypes. In particular, the attention module in Stage 3 yielded comparatively larger performance gains for higher prototype counts by suppressing less relevant prototypes. This confirmed its effectiveness in maintaining discriminative prototype usage, in spite of the overall results remaining inferior compared to the more compact four-prototype configuration. Four prototypes per class were observed to provide an effective trade-off between expressive capacity and compactness. Hence, this configuration was selected for subsequent experiments.

\begin{figure}[!ht]
    \centering
    \includegraphics[width=0.6 \linewidth, trim=0.2cm 0.2cm 0.2cm 0.2cm, clip]{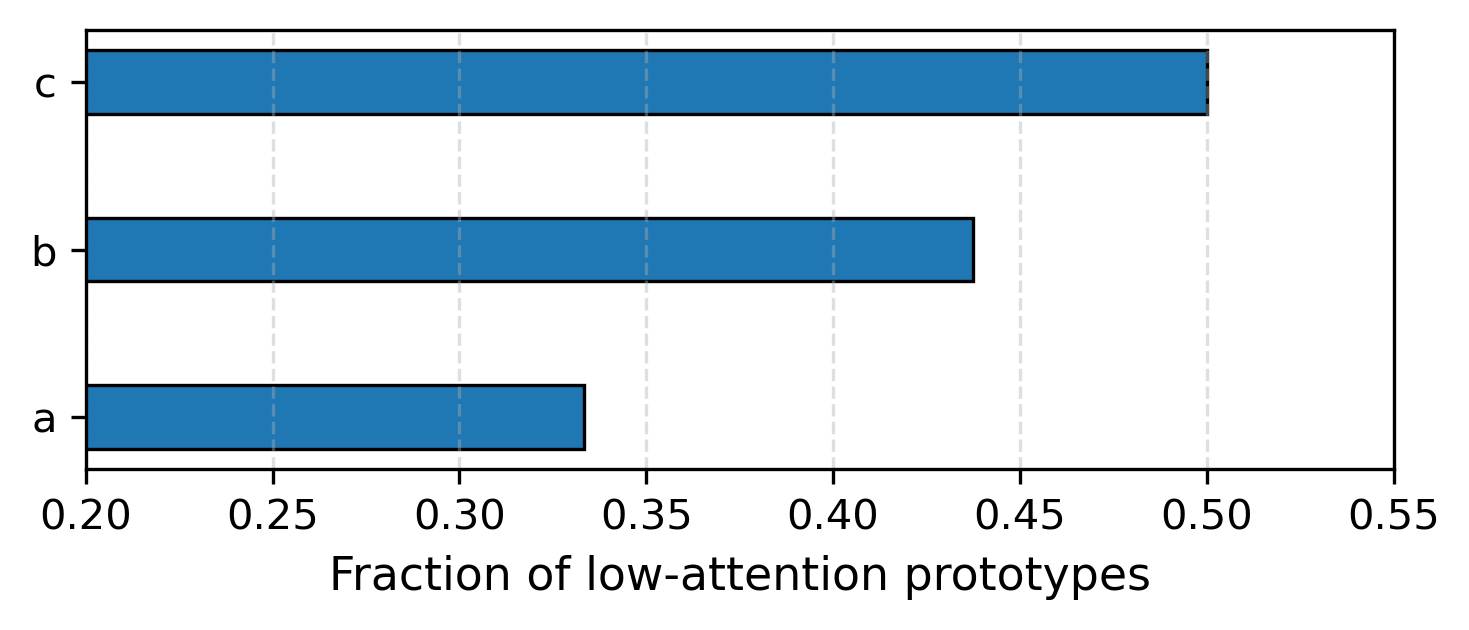}
    \caption{Fraction of low-attention prototypes, for configurations having prototypes per class as (a) 4, (b) 5, and (c) 6, over the PANDA dataset}
    \label{fig:prot_frac}
\end{figure}

Ablation on Stage 3 modules, over the PANDA dataset in Table \ref{att_ablation}, highlights that the attention block and $\mathcal{L}_{\text{attn}}$ loss play complementary roles. While the attention block enabled adaptive prototype weighting at the slide level, the BCE alone could not explicitly discourage prototypes that activated across both relevant and irrelevant classes. The classwise discriminative loss added this missing supervision signal, by penalizing cross-class and negative-slide activations; thereby, leading to improved performance.

\vspace{-4pt}
\begin{table}[!ht]
\centering
\caption{\centering Ablation study of the Stage 3 modules on the PANDA dataset}
\begin{tabular}{|lll|ll|}
\hline
\multicolumn{3}{|c|}{\textbf{Modules}}                                                                                & \multicolumn{2}{c|}{\textbf{Metrics}}                                               \\ \hline
\multicolumn{1}{|c|}{\textbf{Attention block}} & \multicolumn{1}{c|}{\textbf{BCE}} & \multicolumn{1}{c|}{\textbf{$\mathcal{L}_{\text{attn}}$}} & \multicolumn{1}{c|}{\textbf{F1 score}} & \multicolumn{1}{c|}{\textbf{Hamming loss}} \\ \hline
\multicolumn{1}{|l|}{}                         & \multicolumn{1}{l|}{}             &                                  & \multicolumn{1}{l|}{0.8130}            & 0.1916                                     \\ \hline
\multicolumn{1}{|l|}{\textbf{\ding{51}}}                     & \multicolumn{1}{l|}{\textbf{\ding{51}}}         &                                  & \multicolumn{1}{l|}{0.8255}            & 0.1800                                     \\ \hline
\multicolumn{1}{|l|}{\textbf{\ding{51}}}                     & \multicolumn{1}{l|}{\textbf{\ding{51}}}         & \textbf{\ding{51}}                             & \multicolumn{1}{l|}{0.8312}            & 0.1769                                     \\ \hline
\end{tabular}
\label{att_ablation}
\end{table}
\vspace{-4pt}

The patch-level prototype cross-activation matrix, on the PANDA dataset, was analyzed in Fig. \ref{fig:prot_matrix} to verify whether the framework could induce class-selective prototype behavior. The matrix exhibits clear diagonal dominance, where patches of each Gleason grade received the highest weighted activation from prototypes of the same class and consistently lower cross-class responses. This behavior matches the objective of the Stage-3 attention training and the classwise discriminative loss, which suppressed cross-class prototype activation while promoting class-specific prototype usage. The remaining non-zero cross-class responses are clinically plausible due to the partial morphological overlap between neighbouring grades. The analysis reaffirmed the quantitative gains observed in the ablation studies.

\begin{table}[!ht]
\centering
\caption{\centering Grade-wise performance of ADAPT on test data}
\begin{tabular}{|l|ll|ll|}
\hline
\multirow{2}{*}{\textbf{Grade}} & \multicolumn{2}{c|}{\textbf{PANDA}}                            & \multicolumn{2}{c|}{\textbf{SICAP}}                            \\ \cline{2-5} 
                                & \multicolumn{1}{l|}{\textbf{F1 score}} & \textbf{Hamming loss} & \multicolumn{1}{l|}{\textbf{F1 score}} & \textbf{Hamming loss} \\ \hline
GG3                             & \multicolumn{1}{l|}{0.8483}            & 0.2341                & \multicolumn{1}{l|}{0.8101}            & 0.1764                \\ \hline
GG4                             & \multicolumn{1}{l|}{0.8912}            & 0.1699                & \multicolumn{1}{l|}{0.8762}            & 0.2100                \\ \hline
GG5                             & \multicolumn{1}{l|}{0.7542}            & 0.1238                & \multicolumn{1}{l|}{0.7764}            & 0.1848                \\ \hline
\end{tabular}
\label{test_both}
\end{table}
\vspace{-4pt}

The generalizability of our ADAPT framework was evaluated in terms of the class-wise performance on the PANDA and the independent SICAP test set, as reported in Table \ref{test_both}. Despite differences in the characteristics of the two datasets and their acquisition protocols, the model maintained competitive F1 scores across all Gleason grades on SICAP, with only moderate performance variation. This indicates that the learned prototypes consistently captured transferable, class-relevant morphological patterns instead of the dataset-specific attributes. This study corroborates the robustness and cross-dataset applicability of ADAPT.

\vspace{-4pt}
\begin{figure}[!ht]
    \centering
    \includegraphics[width=0.25\textwidth, trim=0.2cm 0.2cm 0.2cm 0.2cm, clip]{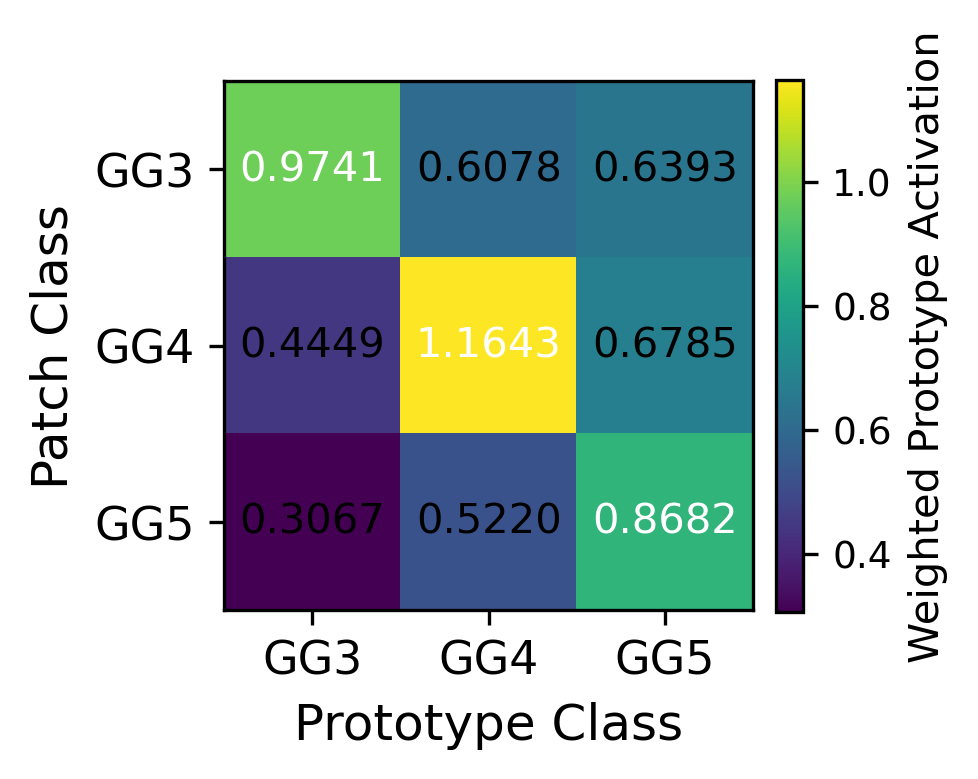}
    \caption{Prototype cross-activation matrix showing mean weighted prototype activations across patch classes for the PANDA dataset}
    \label{fig:prot_matrix}
\end{figure}
\vspace{-4pt}

\subsection{Qualitative analysis}

Fig. \ref{fig:prot_global} shows the learned prototypes for Gleason grades 3, 4, and 5, along with their attention weights and the corresponding annotated training patches, for the PANDA dataset. It is observed that prototypes having high attention scores ($> 0.5$) consistently localize well-defined glandular structures (characteristic of their assigned Gleason patterns), demonstrating clear agreement between the highlighted regions and the ground-truth masks. In contrast, prototypes with lower attention weights (particularly $< 0.5$) are visibly non-discriminative, often activating on regions such as stroma [4th row in (a) representing GP 3], benign epithelium [4th row in (b) representing GP 4], noisy regions [3rd row in (c) representing GP 5] or even incorrect Gleason morphologies [4th row in (c) representing GP 5]. This demonstrates that the attention module effectively down-weights the prototypes capturing background or misleading patterns, while preserving those that encode class-specific cancer morphology. This helps the model to base its decisions on clinically meaningful prototypes; thus, improving the transparency and reliability of its reasoning.

\begin{figure*}[!ht]
    \centering
    \includegraphics[width=0.85\linewidth]{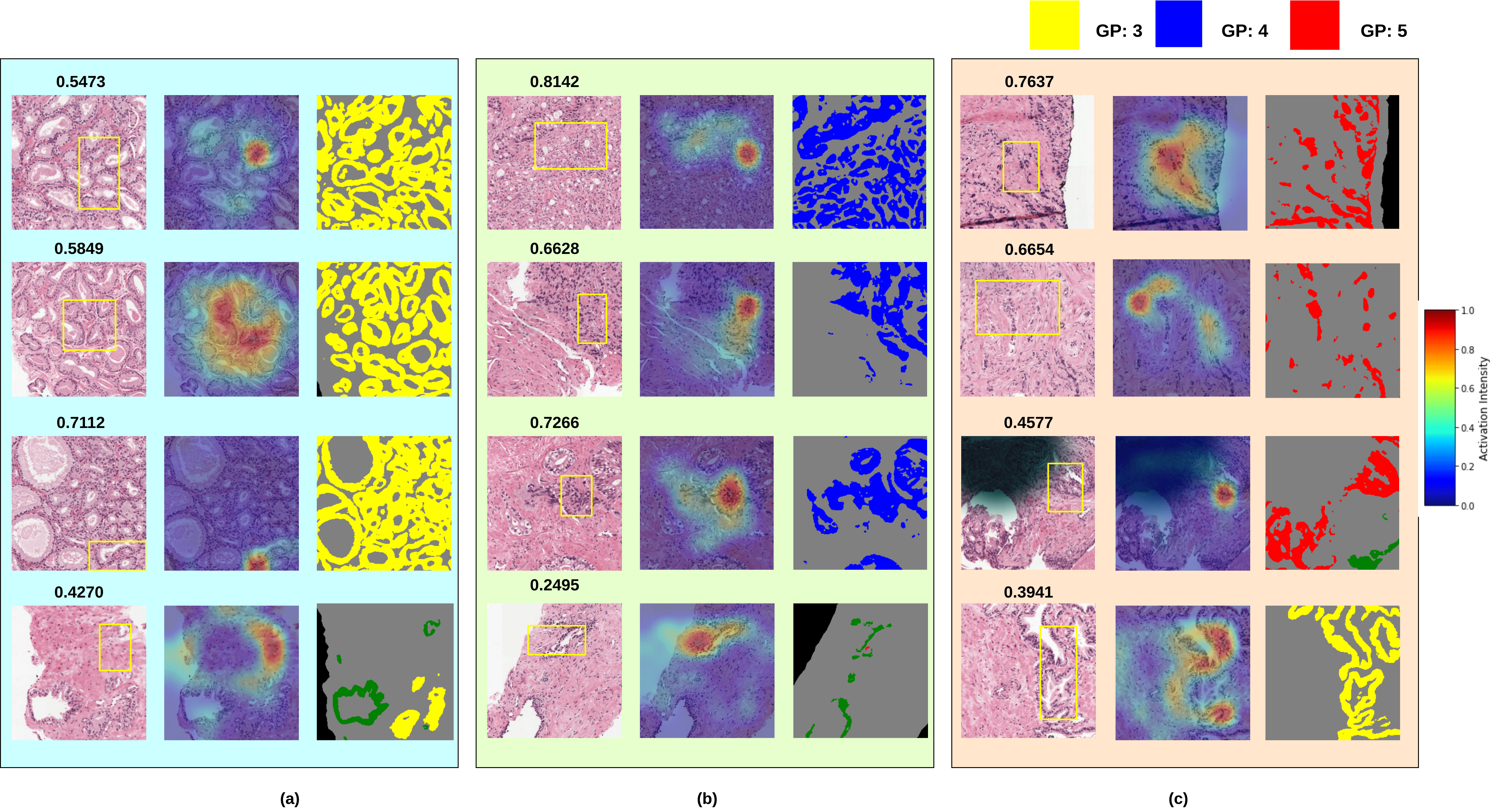}
    \caption{Prototype visualization as training patches for Gleason grades (a) 3, (b) 4, and (c) 5, respectively, for the PANDA dataset. Within each group, Column 1 shows the original training patch with the prototype-activated region highlighted using a bounding box; Column 2 displays the corresponding prototype self-activation heatmap; and Column 3 represents the ground-truth Gleason mask}
    \label{fig:prot_global}
\end{figure*}

 Fig. \ref{fig:test_images} assesses the interpretability of the ADAPT framework, in terms of prototype-based explanation for two representative WSIs from the PANDA and SICAP datasets, respectively, having complex heterogeneous Gleason compositions [GS 7 (PANDA): 4+3 and GS 9 (SICAP): 4+5]. For each test image, we visualize how the model arrived at its WSI-level Gleason prediction.  Patches having highest predicted probabilities are identified for each Gleason pattern present at the WSI-level, and their correspondence shown to the most similar learned prototypes. It is demonstrated (with heatmaps and bounding boxes) that the morphological structures corresponding to the top-scoring patch for each GP, and its most similar prototype, drive the prediction. The visualization consistently demonstrates that GP-specific prototypes activate on the appropriate glandular formations; such as, well-formed glands for GP 3, fused and poorly formed glands for GP 4, or solid sheets for GP 5. Similarity scores, prototype activations, and ground-truth masks collectively demonstrate that the slide-level decisions of ADAPT are grounded in localized, clinically meaningful morphological patterns. Further, the consistent performance across both PANDA and SICAP demonstrates that the learned prototype reasoning generalizes reliably.

\begin{figure*}[!ht]
    \centering
    \includegraphics[width=0.80\textwidth]{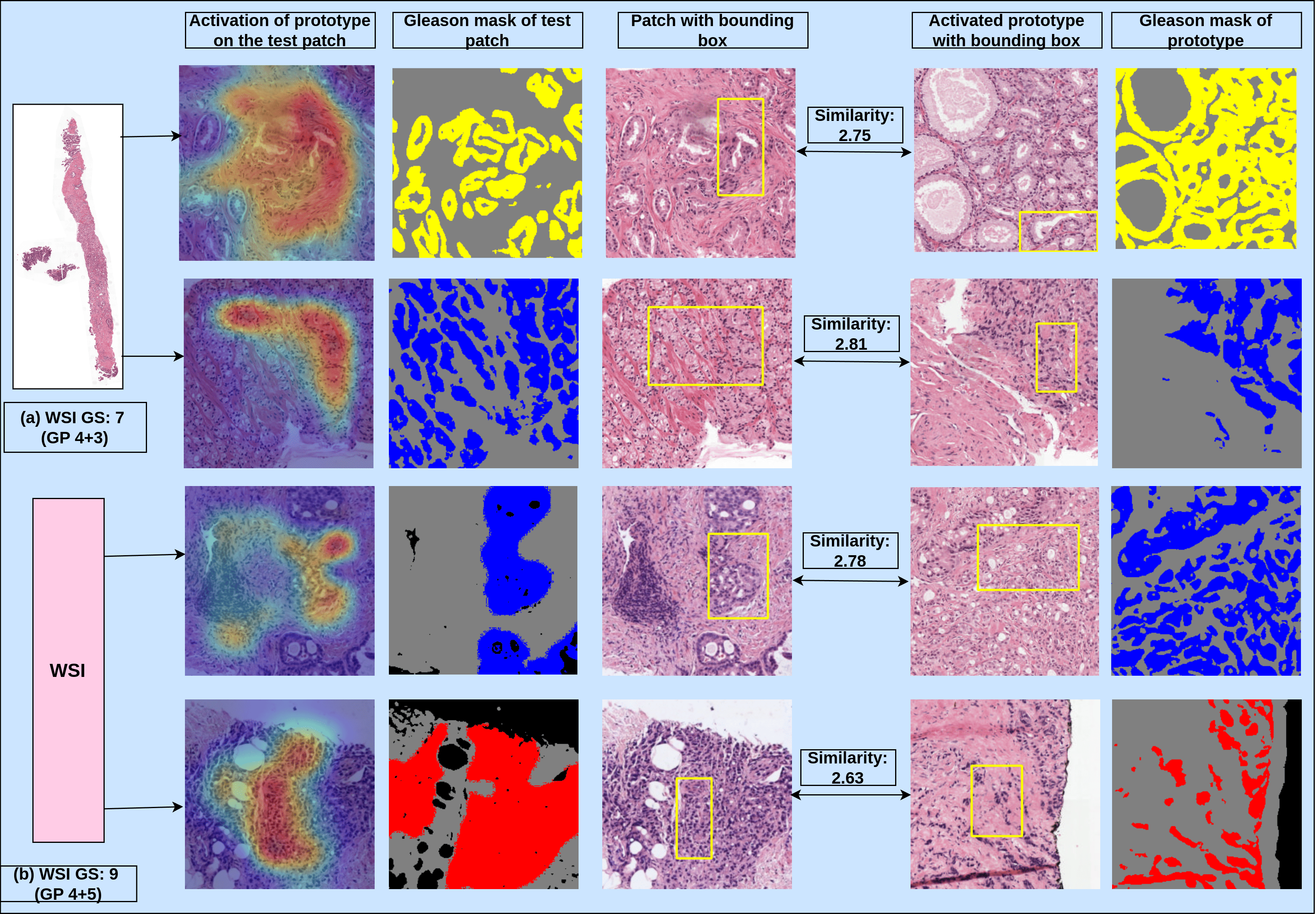}
    \caption{Prototype-based explanations for test WSIs from (a) PANDA, (b) SICAP datasets (WSI-level images are not available for SICAP). For each  Gleason pattern present at WSI-level, the most confident patch predicted by ADAPT is visualized in terms of nearest prototype activation, bounding-box correspondence, Gleason mask, and similarity to the nearest prototype}
    \label{fig:test_images}
\end{figure*}

\section{Conclusion}
\label{sec:conc}

A new prototype-based framework was developed to improve interpretability in prostate cancer grading from whole-slide histopathology images. Stage 1 patch-level pretraining enabled the learning of robust prototypical features associated with each Gleason grade; thereby, providing a strong initialization for WSI-level grading in a weakly-supervised MIL setting. The Stage 1 model was then innovatively fine-tuned for WSI-level grading in an MIL framework. Thus, Stage 2 addressed the domain shift between patch and slide distributions by introducing prototype-aware regularizers, such as positive alignment loss and negative repulsion loss. While the positive alignment loss encouraged recovery of missed evidence for true classes, the negative repulsion loss aimed to suppress misleading evidence responsible for false positives. Finally, the novel attention-based dynamic pruning mechanism in Stage 3 selectively emphasized more informative prototypes for generating optimal performance.

Experimental results demonstrated the contribution of each stage to the grading performance, with consistent gains observed across different prototype configurations. Qualitative analysis further revealed that the prototypes receiving higher weights corresponded to diagnostically relevant regions; while the lower weight prototypes were largely non-discriminative -- activating on regions such as stroma, benign epithelium or noise. The results reaffirmed the importance of the attention-based dynamic pruning mechanism following Stages 1 and 2. The consistency of the prototype–patch correspondence, over both the PANDA and SICAP datasets, highlights the robustness and generalizability of the ADAPT framework to varying data distributions and acquisition settings. Overall, the approach provides a reliable and interpretable foundation for assisting pathologists in the grading of prostate cancer.

Future study will investigate other strategies for simplifying the learned prototype set, based on their relevance. Extending the evaluation to larger, multi-institutional cohorts will help assess robustness under broader criteria of data heterogeneity. Incorporating complementary clinical or contextual information while adapting the framework to other histopathological grading tasks also constitute promising directions of research.

\bibliographystyle{unsrt}  
\bibliography{references}

\end{document}